\newcommand\algorithmicprocedure{\textbf{procedure}}
\newcommand{\algorithmicendprocedure}{\algorithmicend\ \algorithmicprocedure}
\newcommand\PROCEDURE[3][default]{%
	\ALC@it
	\algorithmicprocedure\ \textsc{#2}(#3)%
	\ALC@com{#1}%
	\begin{ALC@prc}%
	}
	\newcommand\ENDPROCEDURE{%
	\end{ALC@prc}%
	\ifthenelse{\boolean{ALC@noend}}{}{%
		\ALC@it\algorithmicendprocedure
	}%
}
\newenvironment{ALC@prc}{\begin{ALC@g}}{\end{ALC@g}}
\newtheorem{remark}{Remark}
\newtheorem{lemma}{Lemma}
\newtheorem{theorem}{Theorem}
\newenvironment{proof}{\begin{IEEEproof}}{\end{IEEEproof}}
\def\BibTeX{{\rm B\kern-.05em{\sc i\kern-.025em b}\kern-.08em
		T\kern-.1667em\lower.7ex\hbox{E}\kern-.125emX}}
\begin{document}
\title{Monitoring State Transitions in Markovian Systems with Sampling Cost} 

\author{\IEEEauthorblockN{Kumar Saurav}
	\IEEEauthorblockA{
		\textit{Shiv Nadar Institution of Eminence}\\
		Delhi-NCR, India \\
		kumar.saurav@snu.edu.in}
	\and
	\IEEEauthorblockN{Ness B. Shroff}
	\IEEEauthorblockA{\textit{The Ohio State University} \\
		\textit{Columbus, Ohio, USA}\\
		shroff.11@osu.edu.in}
	\and
	\IEEEauthorblockN{Yingbin Liang}
	\IEEEauthorblockA{\textit{The Ohio State University} \\
		\textit{Columbus, Ohio, USA}\\
		liang.889@osu.edu.in}
}
\vspace{-2in}

\def\onehalf{\frac{1}{2}}
\def\etal{et.\/ al.\/}
\newcommand{\bydef}{\triangleq}
\newcommand{\tr}{{\it{tr}}}
\def\SNR{{\textsf{SNR}}}
\def\bydef{:=}
\def\bba{{\mathbb{a}}}
\def\bbb{{\mathbb{b}}}
\def\bbc{{\mathbb{c}}}
\def\bbd{{\mathbb{d}}}
\def\bbee{{\mathbb{e}}}
\def\bbff{{\mathbb{f}}}
\def\bbg{{\mathbb{g}}}
\def\bbh{{\mathbb{h}}}
\def\bbi{{\mathbb{i}}}
\def\bbj{{\mathbb{j}}}
\def\bbk{{\mathbb{k}}}
\def\bbl{{\mathbb{l}}}
\def\bbm{{\mathbb{m}}}
\def\bbn{{\mathbb{n}}}
\def\bbo{{\mathbb{o}}}
\def\bbp{{\mathbb{p}}}
\def\bbq{{\mathbb{q}}}
\def\bbr{{\mathbb{r}}}
\def\bbs{{\mathbb{s}}}
\def\bbt{{\mathbb{t}}}
\def\bbu{{\mathbb{u}}}
\def\bbv{{\mathbb{v}}}
\def\bbw{{\mathbb{w}}}
\def\bbx{{\mathbb{x}}}
\def\bby{{\mathbb{y}}}
\def\bbz{{\mathbb{z}}}
\def\bb0{{\mathbb{0}}}

\def\bydef{:=}
\def\ba{{\mathbf{a}}}
\def\bb{{\mathbf{b}}}
\def\bc{{\mathbf{c}}}
\def\bd{{\mathbf{d}}}
\def\bee{{\mathbf{e}}}
\def\bff{{\mathbf{f}}}
\def\bg{{\mathbf{g}}}
\def\bh{{\mathbf{h}}}
\def\bi{{\mathbf{i}}}
\def\bj{{\mathbf{j}}}
\def\bk{{\mathbf{k}}}
\def\bl{{\mathbf{l}}}
\def\bm{{\mathbf{m}}}
\def\bn{{\mathbf{n}}}
\def\bo{{\mathbf{o}}}
\def\bp{{\mathbf{p}}}
\def\bq{{\mathbf{q}}}
\def\br{{\mathbf{r}}}
\def\bs{{\mathbf{s}}}
\def\bt{{\mathbf{t}}}
\def\bu{{\mathbf{u}}}
\def\bv{{\mathbf{v}}}
\def\bw{{\mathbf{w}}}
\def\bx{{\mathbf{x}}}
\def\by{{\mathbf{y}}}
\def\bz{{\mathbf{z}}}
\def\b0{{\mathbf{0}}}
\def\opt{\mathsf{OPT}}
\def\on{\mathsf{ON}}
\def\off{\mathsf{OF}}
\def\bA{{\mathbf{A}}}
\def\bB{{\mathbf{B}}}
\def\bC{{\mathbf{C}}}
\def\bD{{\mathbf{D}}}
\def\bE{{\mathbf{E}}}
\def\bF{{\mathbf{F}}}
\def\bG{{\mathbf{G}}}
\def\bH{{\mathbf{H}}}
\def\bI{{\mathbf{I}}}
\def\bJ{{\mathbf{J}}}
\def\bK{{\mathbf{K}}}
\def\bL{{\mathbf{L}}}
\def\bM{{\mathbf{M}}}
\def\bN{{\mathbf{N}}}
\def\bO{{\mathbf{O}}}
\def\bP{{\mathbf{P}}}
\def\bQ{{\mathbf{Q}}}
\def\bR{{\mathbf{R}}}
\def\bS{{\mathbf{S}}}
\def\bT{{\mathbf{T}}}
\def\bU{{\mathbf{U}}}
\def\bV{{\mathbf{V}}}
\def\bW{{\mathbf{W}}}
\def\bX{{\mathbf{X}}}
\def\bY{{\mathbf{Y}}}
\def\bZ{{\mathbf{Z}}}
\def\b1{{\mathbf{1}}}

\def\bbA{{\mathbb{A}}}
\def\bbB{{\mathbb{B}}}
\def\bbC{{\mathbb{C}}}
\def\bbD{{\mathbb{D}}}
\def\bbE{{\mathbb{E}}}
\def\bbF{{\mathbb{F}}}
\def\bbG{{\mathbb{G}}}
\def\bbH{{\mathbb{H}}}
\def\bbI{{\mathbb{I}}}
\def\bbJ{{\mathbb{J}}}
\def\bbK{{\mathbb{K}}}
\def\bbL{{\mathbb{L}}}
\def\bbM{{\mathbb{M}}}
\def\bbN{{\mathbb{N}}}
\def\bbO{{\mathbb{O}}}
\def\bbP{{\mathbb{P}}}
\def\bbQ{{\mathbb{Q}}}
\def\bbR{{\mathbb{R}}}
\def\bbS{{\mathbb{S}}}
\def\bbT{{\mathbb{T}}}
\def\bbU{{\mathbb{U}}}
\def\bbV{{\mathbb{V}}}
\def\bbW{{\mathbb{W}}}
\def\bbX{{\mathbb{X}}}
\def\bbY{{\mathbb{Y}}}
\def\bbZ{{\mathbb{Z}}}

\def\cA{\mathcal{A}}
\def\cB{\mathcal{B}}
\def\cC{\mathcal{C}}
\def\cD{\mathcal{D}}
\def\cE{\mathcal{E}}
\def\cF{\mathcal{F}}
\def\cG{\mathcal{G}}
\def\cH{\mathcal{H}}
\def\cI{\mathcal{I}}
\def\cJ{\mathcal{J}}
\def\cK{\mathcal{K}}
\def\cL{\mathcal{L}}
\def\cM{\mathcal{M}}
\def\cN{\mathcal{N}}
\def\cO{\mathcal{O}}
\def\cP{\mathcal{P}}
\def\cQ{\mathcal{Q}}
\def\cR{\mathcal{R}}
\def\cS{\mathcal{S}}
\def\cT{\mathcal{T}}
\def\cU{\mathcal{U}}
\def\cV{\mathcal{V}}
\def\cW{\mathcal{W}}
\def\cX{\mathcal{X}}
\def\cY{\mathcal{Y}}
\def\cZ{\mathcal{Z}}

\def\sfA{\mathsf{A}}
\def\sfB{\mathsf{B}}
\def\sfC{\mathsf{C}}
\def\sfD{\mathsf{D}}
\def\sfE{\mathsf{E}}
\def\sfF{\mathsf{F}}
\def\sfG{\mathsf{G}}
\def\sfH{\mathsf{H}}
\def\sfI{\mathsf{I}}
\def\sfJ{\mathsf{J}}
\def\sfK{\mathsf{K}}
\def\sfL{\mathsf{L}}
\def\sfM{\mathsf{M}}
\def\sfN{\mathsf{N}}
\def\sfO{\mathsf{O}}
\def\sfP{\mathsf{P}}
\def\sfQ{\mathsf{Q}}
\def\sfR{\mathsf{R}}
\def\sfS{\mathsf{S}}
\def\sfT{\mathsf{T}}
\def\sfU{\mathsf{U}}
\def\sfV{\mathsf{V}}
\def\sfW{\mathsf{W}}
\def\sfX{\mathsf{X}}
\def\sfY{\mathsf{Y}}
\def\sfZ{\mathsf{Z}}

\def\bydef{:=}
\def\sfa{{\mathsf{a}}}
\def\sfb{{\mathsf{b}}}
\def\sfc{{\mathsf{c}}}
\def\sfd{{\mathsf{d}}}
\def\sfee{{\mathsf{e}}}
\def\sfff{{\mathsf{f}}}
\def\sfg{{\mathsf{g}}}
\def\sfh{{\mathsf{h}}}
\def\sfi{{\mathsf{i}}}
\def\sfj{{\mathsf{j}}}
\def\sfk{{\mathsf{k}}}
\def\sfl{{\mathsf{l}}}
\def\sfm{{\mathsf{m}}}
\def\sfn{{\mathsf{n}}}
\def\sfo{{\mathsf{o}}}
\def\sfp{{\mathsf{p}}}
\def\sfq{{\mathsf{q}}}
\def\sfr{{\mathsf{r}}}
\def\sfs{{\mathsf{s}}}
\def\sft{{\mathsf{t}}}
\def\sfu{{\mathsf{u}}}
\def\sfv{{\mathsf{v}}}
\def\sfw{{\mathsf{w}}}
\def\sfx{{\mathsf{x}}}
\def\sfy{{\mathsf{y}}}
\def\sfz{{\mathsf{z}}}
\def\sf0{{\mathsf{0}}}

\def\Nt{{N_t}}
\def\Nr{{N_r}}
\def\Ne{{N_e}}
\def\Ns{{N_s}}
\def\Es{{E_s}}
\def\No{{N_o}}
\def\sinc{\mathrm{sinc}}
\def\dmin{d^2_{\mathrm{min}}}
\def\vec{\mathrm{vec}~}
\def\kron{\otimes}
\def\Pe{{P_e}}
\newcommand{\expeq}{\stackrel{.}{=}}
\newcommand{\expg}{\stackrel{.}{\ge}}
\newcommand{\expl}{\stackrel{.}{\le}}
\def\SIR{{\mathsf{SIR}}}

\def\nn{\nonumber}

\maketitle
\begin{abstract}
	We consider a node-monitor pair, where the node's state varies with time. The monitor needs to track the node's state at all times, however, there is a fixed cost for each state query.
	So the monitor may instead predict the state using time-series forecasting methods, including time-series foundation models (TSFMs), and query only when prediction uncertainty is high. Since query decisions influence prediction accuracy, determining when to query is nontrivial. A natural approach is a greedy policy that predicts when the expected prediction loss is below the query cost and queries otherwise. We analyze this policy in a Markovian setting, where the optimal (OPT) strategy is a state-dependent threshold policy minimizing the time-averaged sum of query cost and prediction losses. We show that, in general, the greedy policy is suboptimal and can have an unbounded competitive ratio, but under common conditions—such as identically distributed transition probabilities—it performs close to OPT. For the case of unknown transition probabilities, we further propose a projected stochastic gradient descent (PSGD)-based learning variant of the greedy policy, which achieves a favorable predict–query tradeoff with improved computational efficiency compared to OPT. 
\end{abstract}

\begin{IEEEkeywords}
	Markov chain, scheduling, remote estimation, age of information. 
\end{IEEEkeywords}

\maketitle

\section{Introduction} \label{sec:introduction}

Consider a status update system, where a remote monitor needs to track the time-varying state of a node. Traditionally, in such systems, the monitor repeatedly queries the node, and maintains an accurate estimate of its current state at all times. However, with advent of large scale energy-constrainted IoT systems, it is critical to optimize the energy and channel usage. Therefore, in the modern paradigm, 
the monitor must use the available information to predict node's current state, and query only when the uncertainty in node's state is large. In this paper, we refer to it as \emph{interleaved query-predict approach (IQP)}.

IQP raises two coupled questions: \((i)\) how to predict, and \((ii)\) when to query.  A natural greedy rule predicts the state with minimum expected loss and queries only when that loss exceeds the query cost.  However, this ignores the informational value of queries: a query can reduce future prediction errors, so querying now may lower long-run cost even if its immediate cost exceeds the current expected loss.  While the greedy policy is simple and attractive under model uncertainty, an optimal policy may be computationally costly.  Hence we seek policies that balance near-optimal query costs and prediction loss with practical implementability.
Therefore, the decision problem needs to be analyzed thoroughly to find a balanced solution that is near-optimal in query cost and prediction loss, and easily implementable for wider adoption. 

With this motivation, in this paper, we consider the greedy policy introduced above, and compare its performance in relation to an optimal policy (OPT) that incurs minimum average cost (weighted sum of average query cost and average prediction loss). We seek to identify scenarios where the greedy policy (which is simple to implement) is a promising alternative to OPT in average cost. To this end, we consider a setting where the node's state varies following a finite-state Markov chain, and analyze the policies for scenarios where the state transition probabilities are i) known, and ii) unknown. 

The reason we consider a Markovian system is two-folds. First, it provides a structured setting where we can derive OPT and analyze its properties in relation to the greedy policy. Second, in this setting, OPT can make long term optimizations, and therefore has most advantage over the greedy policy. Thus, intuitively, it provides a bound on the optimality gap for the greedy policy in generic settings. 

Further, we adopt a linear combination of prediction loss and query cost as the objective, as it compactly captures the query–predict tradeoff and extends naturally to multi-node settings.
(i) With suitable weights \cite{saurav2024minimizing}, this formulation balances inversely related costs while satisfying all constraints.
(ii) It further allows derivation of Whittle’s index (WI) \cite{nino2023markovian}, enabling Whittle’s index policy (WIP) for efficient multi-node query scheduling to minimize average prediction error.
{\bf Related works.} Prior works on remote tracking have mostly focused on minimizing state error using metrics like age-of-information (AoI) \cite{kaul2012real,saurav2023online}, age-of-incorrect-information (AoII) \cite{joshi2021minimization}, and others \cite{salimnejad2025age}. These approaches typically optimize query timing to reduce delay between state changes and subsequent queries, subject to query cost constraints. However, treating all state errors equally can be misleading. For instance, in a safety application, failing to detect a fire is far more costly than a false alarm. Ignoring such inter-state criticality may lead to severe consequences. 

Another variant of remote tracking problem considered in literature is on estimation of remote process, where the goal is to minimize the mean squared error between the actual and the estimated process \cite{sun2019sampling,ornee2021sampling}. These works assume that the process follows simple dynamics such as Wiener or Ornstein-Uhlenbeck process. Therefore, the estimation error is correlated to the time since the last query, and minimizing the mean squared error reduces to minimizing AoI. Note that such an approach is not suitable for discrete state spaces where state transitions may be sudden, and cost due to state errors may vary significantly across different states.

Most relevant work on remote tracking with inter-state criticality is \cite{ornee2025remote}, which considers a multi-node setting where the state of the node varies following a finite-state Markov chain, and in each time slot, the monitor must choose the subset of nodes to query such that, under a constraint on the maximum number of simultaneous queries, the average state error across all nodes is minimized. With some relaxations, this problem reduces to the single-node setting considered in this paper, and assuming the state transition probabilities are known, \cite{ornee2025remote} models it as a Markov decision process (MDP), and solves it using value iteration. However, this needs to be repeated for each time slot, making the solution computationally expensive. Hence, we need a simpler alternative such as a greedy policy. 

An important aspect of the considered problem is the knowledge of the state transition probability matrix $\cP$ which determines the uncertainty in future states. Though prior works like \cite{sun2019sampling,ornee2021sampling,ornee2025remote} assume $\cP$ to be known, in practice, $\cP$ may have to be inferred/learnt from observations. A popular approach is to use the classical stopping-time technique \cite{hao2018learning} where a policy queries in first $\alpha$ time slots (for some $\alpha>0$) and estimate one-step transition probability $\hat{p}_{ij}$ (from state $s_i$ to $s_j$) by computing the ratio of number of transitions from state $s_i$ to $s_j$ and the number of visits to state $s_i$. However, as we discuss in Section \ref{sec:unknownP}, this approach is inefficient for estimating $n-$step transition probability ($n\gg 1$) as is required for the considered problem. This adds another dimension to the considered problem calling for a more efficient approach to learn 
$\cP$ with only few intermittent queries. 


{\bf Contributions.} In this paper, we derive OPT 
for the considered problem with known $\cP$, and compare the greedy policy $\pi^g$ (and some other heuristic policies) with OPT. We show that in certain scenarios $\pi^g$'s performance can be arbitrarily bad compared to OPT, and propose a fix for such an issue. We also identify settings where $\pi^g$ has near-optimal performance, and is preferable over OPT due to ease of implementation (e.g. when transition probabilities are unknown). 

\section{System Model} \label{sec:sysmodel}

Consider a discrete-time setting where the state of a node varies with time following some finite (discrete)-state Markov chain $\cM$, and a monitor needs to track the node's current state at all times. To track the state, the monitor can either query (sample) the current state of the node, or predict it based on the previously queried state.

Let $\cS=\{s_1,s_2,...,s_K\}$ (for finite integer $K>0$) denote the set of all possible states, and $\cP$ be the state transition probability matrix for $\cM$. 
At time $t$, if the current state of the node is $X_t\in \cS$, and the monitor predicts the state to be $\hat{X}_t\in\cS$, then there is a state error (prediction loss) equal to $L(X_t,\hat{X}_t)\ge 0$, where $L(\cdot,\cdot)$ is the loss function. If the monitor queries the state at time $t$, then the state error is $0$, and the monitor receives the actual state $X_t$ (i.e. $\hat{X}_t=X_t$). However, it incurs a fixed query (sampling) cost $c\ge 0$. 

\begin{remark} \label{remark:notation-loss-matrix}
    We assume that the loss $L(X_t,\hat{X}_t)$ is deterministic for all possible combinations of $X_t,\hat{X}_t\in\cS$, and recorded as a loss matrix $\cL\in\bbR^{K\times K}_{\ge 0}$, with $(i,j)-$th element $\ell_{ij}=L(s_i,s_j)$ $\forall i,j\in\{1,2,...,K\}$. 
\end{remark}
 
The goal is to find an online (causal) policy $\pi$ that at each time $t$, decides whether to query or predict, and if predict, then how to choose $\hat{X}_t$ such that the time-averaged sum of the expected prediction loss and the query cost (in short, \emph{sum average cost} $\Gamma(\pi)$) is minimized.
 Succinctly, this is given as
\begin{align} \label{eq:objective}
    \underset{\pi\in\Pi}{\min} \ \Gamma(\pi) \equiv \underset{\pi\in\Pi}{\min} \ \lim_{T\to\infty}\frac{1}{T}\sum_{t=0}^T\bbE[L(X_t,\hat{X}^\pi_t)+c\cdot Q^\pi_t], 
\end{align}
where $\Pi$ denotes the set of all online policies $\pi$, while $Q^\pi_t$ denotes the number of queries made until time $t$ under policy $\pi$. In this paper, we consider two variants of the above problem: (i) when $\cP$ is known at the monitor, and (ii) when $\cP$ is unknown at the monitor. First, we focus on the former case, in Section \ref{sec:knownP}, and analyze the performance of a greedy policy $\pi^g$ against an optimal policy $\pi^\star$ for \eqref{eq:objective}. Subsequently, in Section \ref{sec:unknownP} we extend $\pi^g$ to learn $\cP$, and discuss its practicality over $\pi^\star$ due to ease of implementation and low-complexity.

\begin{remark} \label{remark:notation-p-hat-tau}
    In this paper, let \( p_{ij}^{(n)} \) denote the \( n \)-step transition probability from state \( s_i \) to \( s_j \), where \( p_{ij}^{(1)} = p_{ij} \); equivalently, \( p_{ij}^{(n)} \) is the \((i,j)\)-th entry of \( \cP^n \), the \(n\)-th power of the transition matrix \( \cP \). Also, the hat symbol (e.g., \( \hat{X} \)) denotes an estimate or prediction of a variable, and \( \tau(t) \) represents the most recent query time before \( t \).
\end{remark}

\section{Case $1$: Transition Probabilities are Known} \label{sec:knownP}

At any time $t$, given $\cP$ and the previously queried state $X_{\tau(t)}$ (at time $\tau(t)$) a critical question is: how to predict the current state $X_t$? We answer this in the following Lemma \ref{lem:optpred}. 

\begin{lemma} \label{lem:optpred}
    Let $n_t=t-\tau(t)$, i.e. the time since last query until the current time $t$. Also, let the queried state at time $\tau(t)$ be $s_i$.
    At time t, the optimal prediction strategy is to pick state 
    \begin{align} \label{eq:optpred}
        s_{k^\star} = \arg \min_{s_k\in\cS} \sum_{s_j\in\cS} p_{ij}^{(n_t)}\cdot \ell_{jk},
    \end{align}
    where $\ell_{jk}$ is the loss when the state is $s_j$ while the prediction is $s_k$. Also, the optimal (expected) prediction loss is $\sum_{s_j\in\cS} p_{ij}^{(n_t)}\ell_{jk^\star}$.
\end{lemma}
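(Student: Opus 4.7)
The plan is to exploit a decoupling: the value chosen for $\hat{X}_t$ affects neither the evolution of the Markov chain $\{X_{t'}\}_{t'\ge t}$ nor the monitor's information state at any subsequent time (which, by construction, is summarized by the last queried state $X_{\tau(t')}$ together with the elapsed time $n_{t'}$). Consequently, the only contribution of $\hat{X}_t$ to the long-run objective $\Gamma(\pi)$ in \eqref{eq:objective} is the instantaneous loss $L(X_t,\hat{X}_t)$, so it suffices to minimize its conditional expectation given the monitor's information available at time $t$.

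Since no queries occur on $(\tau(t), t]$, the monitor's information relevant to $X_t$ consists only of the sequence of previously queried states; by the Markov property, this reduces to $X_{\tau(t)} = s_i$. Applying the Markov property $n_t$ times,
\[
\Pr[X_t = s_j \mid X_{\tau(t)} = s_i] = p_{ij}^{(n_t)},
\]
the $(i,j)$-entry of $\cP^{n_t}$. Hence, for any candidate prediction $s_k \in \cS$,
\[
\bbE[L(X_t, s_k) \mid X_{\tau(t)} = s_i] = \sum_{s_j \in \cS} p_{ij}^{(n_t)}\,\ell_{jk},
\]
and minimizing over the finite set $\cS$ yields \eqref{eq:optpred}. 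The optimal expected loss is then obtained by evaluating this sum at $s_{k^\star}$.

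The only step requiring real care is the decoupling argument in the first paragraph: I must justify that, under any admissible policy $\pi$, future query/predict decisions are measurable with respect to the $\sigma$-field generated by past queried states (not past predicted values), and that the chain's evolution is exogenous to $\hat{X}_t$. Both are immediate from the problem formulation in Section \ref{sec:sysmodel}, where $\hat{X}_t$ is an output of the policy rather than an input to the environment, but it is worth stating explicitly because without it one could imagine a predictor that trades off current loss against informational value, as in the query decision itself. The remaining steps---conditional-expectation computation and a finite argmin---are routine.
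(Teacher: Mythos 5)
Your proof is correct and takes essentially the same route as the paper: compute the conditional distribution $p_{ij}^{(n_t)}$ of $X_t$ given the last queried state, write the expected loss of predicting $s_k$ as $\sum_{s_j}p_{ij}^{(n_t)}\ell_{jk}$, and take the argmin over the finite set $\cS$. The only cosmetic differences are that you make the decoupling of the prediction from future dynamics explicit (which the paper takes for granted), while the paper explicitly allows randomized predictions and observes the minimum is attained at a point mass.
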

\begin{proof}
    Let an optimal prediction policy $\pi^p$ predicts state $\hat{X}_t=s_k$ with probability $q_{ik}^{(n_t)}$ ($\forall k\in\cS$). Since the probability $P(X_t=s_j|X_{\tau(t)}=s_i)=p_{ij}^{(n_t)}$, the expected prediction loss for $\pi^p$ is $\sum_{s_k\in\cS} q_{ik}^{(n_t)}\cdot \sum_{s_j\in\cS} p_{ij}^{(n_t)}\cdot \ell_{jk}$.  
    Clearly, this expected prediction loss for $\pi^p$ is minimum if $q_{ik}^{(n_t)}=1$ for $s_k=s_{k^\star}$ \eqref{eq:optpred}, and $q_{ik}^{(n_t)}=0$ for $s_k\neq s_{k^\star}$. Also, the corresponding minimum prediction loss is equal to $\sum_{s_j\in\cS} p_{ij}^{(n_t)}\ell_{jk^\star}$. 
\end{proof}

The next step is to design a policy that decides whether to predict \eqref{eq:optpred} or query in each time slot such that the average sum cost \eqref{eq:objective} is minimized. To this end, a natural approach is to follow a greedy policy $\pi^g$ (Algorithm \ref{algo:greedy-policy}) that at any time, predicts if the expected prediction loss for \eqref{eq:optpred} is less than the query cost $c$, and queries otherwise. 

\begin{algorithm} [htb]
	\caption{Greedy policy $ \pi^g$.}
	\label{algo:greedy-policy}
\begin{algorithmic}[1]
    \STATE let $s_i$ denote the queried state at time $\tau(t)$, $\forall \ t\ge 1$;
    \FOR{time $t=1,2,3,\cdots$}
        \STATE compute $n_t=t-\tau(t)$; 
        \IF{$\sum_{s_j\in\cS} p_{ij}^{(n_t)}\ell_{jk^\star} < c$} 
        \STATE predict state $\hat{X}_t=s_k^\star$ \eqref{eq:optpred};
        \ELSE
        \STATE query the current state $X_t$;
        \ENDIF
    \ENDFOR
\end{algorithmic}
\end{algorithm}

Intuitively, it is always better to query if the expected prediction loss is higher than (or equal to) the query cost. However, under $\pi^g$, always predicting when the expected prediction loss is less than the query cost can be suboptimal. This is because by revealing current state, a query may help minimize future prediction losses as well, as shown next. 

\begin{theorem} \label{thm:greedy-bad}
    When Markov chain $\cM$ has absorbing states, or in general, states with very little uncertainty in future state trajectory (offering low prediction loss over a period of time), the ratio of the sum average cost \eqref{eq:objective} for $\pi^g$ to that of an optimal policy $\pi^\star$ can be arbitrarily large.
\end{theorem}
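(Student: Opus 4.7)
The plan is to prove the theorem by exhibiting an explicit family of Markov chains with absorbing states on which $\Gamma(\pi^g)/\Gamma(\pi^\star)$ is unbounded. The guiding intuition is that the greedy rule compares only the next slot's expected loss to $c$ and therefore cannot see that a single well-timed query can buy near-zero loss for \emph{all} future slots—exactly the regime that absorbing states create.

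Concretely, I would work with $\cS=\{s_1,s_2,s_3\}$, transition probabilities $p_{12}=p_{13}=1/2$ and $p_{22}=p_{33}=1$, indicator loss $\ell_{ij}=\mathbf{1}\{i\ne j\}$, and query cost $c=1/2+\delta$ for a small $\delta>0$. Assume the monitor has an initial query at $t=0$ revealing $X_0=s_1$. For every $n\ge 1$, the conditional distribution $(p_{12}^{(n)},p_{13}^{(n)})=(1/2,1/2)$ is stationary in $n$, so by Lemma \ref{lem:optpred} the minimum expected one-step prediction loss is exactly $1/2$. Since $1/2<c$, $\pi^g$ predicts at every $t\ge 1$ and never queries again, giving $\Gamma(\pi^g)=1/2$. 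On the other hand, a candidate policy $\tilde{\pi}$ that queries once more at $t=1$, learns the realized absorbing state, and predicts it perfectly thereafter has cumulative cost bounded by $2c$, so $\Gamma(\pi^\star)\le\Gamma(\tilde{\pi})=0$, and the ratio is infinite.

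To make the ``arbitrarily large'' claim hold with a strictly positive denominator, I would soften the absorbing structure by replacing $p_{22}=p_{33}=1$ with $p_{22}=p_{33}=1-\varepsilon$ and $p_{23}=p_{32}=\varepsilon$. A policy that periodically queries, with period tuned to $\varepsilon$, achieves $\Gamma(\pi^\star)\to 0$ as $\varepsilon\downarrow 0$, while a continuity argument shows $\Gamma(\pi^g)=1/2-O(\varepsilon)$ (the stationary sub-chain on $\{s_2,s_3\}$ is uniform, so the greedy predictor's expected per-slot loss is perturbed only by $O(\varepsilon)$). Hence the ratio $\Gamma(\pi^g)/\Gamma(\pi^\star)$ exceeds any prescribed $M$ by choosing $\varepsilon$ small enough.

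The main obstacle I anticipate is not algebraic but presentational: one must verify that no deviation of $\pi^g$ from ``never query after $t=0$'' is triggered, which reduces to checking that the (perturbed) expected loss remains strictly below $c$ uniformly in $n$, and handle the initial transient in the Cesàro limit defining $\Gamma(\cdot)$, under which any finite number of queries contributes $0$ to the time-average cost. Both checks are routine once the construction is in place; the real content of the theorem lies in identifying the structural feature of the chain—absorbing or near-absorbing states with comparable probability masses—that decouples greedy's myopic per-slot view from OPT's ability to amortize a query across long stretches of low-uncertainty evolution.
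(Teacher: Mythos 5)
Your construction is essentially identical to the paper's own proof: the same three-state chain with one transient state splitting equally into two absorbing states, an indicator loss, and a query cost just above the stationary expected loss of $1/2$, so that $\pi^g$ never queries while $\pi^\star$ pays for one query and then predicts perfectly. Your additional $\varepsilon$-perturbation to keep $\Gamma(\pi^\star)$ strictly positive is a sound refinement the paper does not bother with, but it does not change the substance of the argument.
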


\begin{proof} 
    Consider a node with states $\cS=\{s_1,s_2,s_3\}$, and matrices $\cP=\begin{bmatrix} 0 & 0.5 & 0.5 \\ 0 & 1 & 0 \\ 0 & 0 & 1 \end{bmatrix}$ and $\cL=\begin{bmatrix} 0 & 1 & 1 \\ 1 & 0 & 1 \\ 1 & 1 & 0 \end{bmatrix}$. Also, let the initial state $X_0=1$, and the query cost $c=1$. 
    In this setting, note that $\cP$ is idempotent, i.e. $\cP^n=\cP$, and $\sum_{s_j\in\cS} p_{1j}^{(n)}\ell_{jk^\star}=0.5<1=c$, $\forall \ n\ge 1$. Therefore, the greedy policy $\pi^g$ never queries, and incurs an expected prediction loss of $0.5$ in each time slot, leading to the sum average cost \eqref{eq:objective} equal to $0.5$. 
    However, note that $\cP$ is such that starting in state $X_0=1$, in slot $t=1$, the node  transitions either to state $2$ or state $3$ with equal probability, and then remains in that state forever thereafter. 
    Therefore, an optimal policy $\pi^\star$ queries in slot $1$ (incurring a query cost $c=1$), and then predicts in all future time slots as the prediction loss will be $0$. Hence, as $t\to\infty$, the expected sum average cost \eqref{eq:objective} for  $\pi^\star$ approaches $0$.
    Thus, computing the ratio of the expected sum average costs \eqref{eq:objective} for $\pi^g$ and $\pi^\star$, we get the result. 
\end{proof}

The question is, how $\pi^g$ can be made robust against scenarios in Theorem \ref{thm:greedy-bad}, and in what scenarios is $\pi^g$ close to optimal? We analyze this numerically in Section \ref{sec:numerical-results} by comparing the performance of $\pi^g$ with $\pi^\star$, derived next.

\subsection*{An optimal online policy $\pi^\star$ for Markovian systems}
\begin{lemma} \label{lem:optpolicy-struct}
    For the considered Markovian system, there exists an optimal online policy $\pi^\star$ such that i) $\pi^\star$ is stationary, and ii) if the latest query has been in slot $\tau$, revealing the current state $X_\tau=s_i$, $\pi^\star$ queries next in slot $\tau+\mu_i^\star$, where $\mu_i^\star$ is a deterministic positive integer that only depends on $s_i$. Also, in each slot  $t\in\{\tau+1, \tau+2, \cdots , \tau+(\mu_i^\star-1)\}$), $\pi^\star$ predicts following the optimal prediction strategy \eqref{eq:optpred}.
\end{lemma}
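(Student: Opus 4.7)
The plan is to recast the decision problem as a semi-Markov decision process (SMDP) on the finite state space $\cS$, and then invoke standard average-cost SMDP optimality theory to extract a stationary deterministic optimal policy with the claimed threshold structure.

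The first observation is that between two consecutive queries the monitor receives no new information, so at any time $t$ with latest query time $\tau(t)=\tau$ revealing $X_\tau=s_i$, both the query-versus-predict decision and the prediction itself are (possibly randomized) functions of only the pair $(s_i,\,t-\tau)$. By Lemma~\ref{lem:optpred} the prediction itself is pinned down up to tie-breaking, so the only nontrivial choice is when to query next. Combining this with the Markov property---conditioned on $X_\tau=s_i$, the future of the chain is independent of the history before $\tau$---we may restrict attention to policies under which, after a query revealing $s_i$, the next-query time is $\tau+\mu_i$ for some $\{1,2,\ldots\}$-valued (possibly randomized) variable $\mu_i$ depending only on $s_i$.

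Encoding such a policy by its per-state waiting times casts the original problem as an SMDP on $\cS$ with action space $\mathbb{N}$, per-cycle cost
\begin{equation*}
    C(i,\mu) \;=\; c \;+\; \sum_{n=1}^{\mu-1}\min_{s_k\in\cS}\sum_{s_j\in\cS} p_{ij}^{(n)}\ell_{jk},
\end{equation*}
cycle length $\mu$, and post-cycle transition $s_i\mapsto s_j$ with probability $p_{ij}^{(\mu)}$. By the renewal-reward theorem, $\Gamma(\pi)$ equals $\mathbb{E}[C(I,M)]/\mathbb{E}[M]$ under the stationary distribution on post-query states induced by the chosen $\mu_i$'s. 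I would then apply the average-cost SMDP optimality equation: find $g\in\mathbb{R}$ and $h:\cS\to\mathbb{R}$ with
\begin{equation*}
    h(i) \;=\; \inf_{\mu\ge 1}\Bigl\{ C(i,\mu) - g\,\mu + \sum_{s_j\in\cS} p_{ij}^{(\mu)}\, h(j)\Bigr\},
\end{equation*}
so that any selector $\mu_i^\star$ attaining the infimum yields a stationary deterministic optimal policy of average cost $g$. Finiteness of $\cS$ and boundedness of the per-slot loss by $c+\max_{j,k}\ell_{jk}$, together with a truncation argument (solve the problem restricted to $\mu\le M$ via standard finite-action average-cost MDP theory, then let $M\to\infty$), establish existence of both $(g,h)$ and a measurable minimizer.

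The main obstacle will be handling the countably infinite action set cleanly: one must either show that the infimum is attained at some finite $\mu$, or allow $\mu_i^\star=+\infty$ (interpreted as never re-querying after $s_i$) in pathological cases like the absorbing-state example of Theorem~\ref{thm:greedy-bad}. In generic settings---for example, whenever the sub-chain reachable from $s_i$ is irreducible with long-run per-step optimal prediction loss strictly exceeding $c$---large $\mu$ are strictly suboptimal and the infimum is attained at a finite integer. Once attainment (or this convention) is handled, the three claims of the lemma follow immediately: stationarity from the time-homogeneity of the SMDP, determinism of $\mu_i^\star$ from the existence of a deterministic minimizer in the Bellman equation, and the intermediate-slot behavior from Lemma~\ref{lem:optpred}.
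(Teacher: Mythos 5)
Your proposal is correct, and it is essentially a rigorous formalization of what the paper argues informally. The paper's own proof of this lemma is a short verbal argument: by the Markov property, once a query reveals $X_\tau=s_i$ the distribution of future trajectories depends only on $s_i$, and since no new information arrives between queries, the next-query time may be taken to be a deterministic function of $s_i$ alone; the prediction claim then follows from Lemma~\ref{lem:optpred}. Your first paragraph is exactly this reduction. Where you go further is in making the existence claim precise: you encode the per-state waiting times as actions of a semi-Markov decision process on $\cS$, write the average-cost optimality equation, and extract a deterministic stationary selector. Notably, your Bellman equation with $(g,h)$ is precisely the relative-cost recursion \eqref{eq:relative-cost-function-full} that the paper introduces only \emph{after} the lemma (with $g=\Gamma(\pi^\star)$ and $h=h^{\pi}$), so your route anticipates and justifies the machinery the paper uses in Lemma~\ref{lemma:opt-condtion} and Theorem~\ref{thm:opt-using-PI}. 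What your approach buys is a genuine existence proof (via truncation of the countable action set and finite-state average-cost theory) rather than an assertion; what the paper's shorter argument buys is brevity, at the cost of leaving derandomization and attainment implicit.

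One point you raise deserves emphasis: the attainment issue is not merely a technicality. In the absorbing-state example used to prove Theorem~\ref{thm:greedy-bad}, the optimal policy never re-queries after reaching state $s_2$ or $s_3$, so the optimal ``threshold'' for those states is $\mu_i^\star=+\infty$, which contradicts the lemma's claim that $\mu_i^\star$ is a positive \emph{integer}. The paper's proof does not address this, whereas your proposal correctly identifies that one must either allow $\mu_i^\star=+\infty$ or impose a condition (e.g., long-run optimal prediction loss exceeding $c$ from every recurrent class, or the finite cap $N$ on the inter-query time that the paper itself introduces in Remark~\ref{rem:max-inter-query-time}) under which the infimum is attained at a finite integer. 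Making that hypothesis explicit would strengthen the lemma as stated.
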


\begin{proof}
    In any slot $\tau$, if the current state $X_\tau=s_i$ is known, then because of the Markovian property of state transition process, the distribution over the future state trajectories depends only on $s_i$ (i.e. independent of the past states or the time index $\tau$). Moreover, in subsequent time slots, until the state is queried again, there is no external input or additional information that can influence this distribution. Therefore, there must exist an optimal online policy $\pi^\star$ that chooses the next query time deterministically, depending only on $s_i$. This also implies that $\pi^\star$'s decision is independent of $\tau$, i.e. $\pi^\star$ is stationary. 
    
    Further, since $\pi^\star$ does not query at time $t\in\{\tau+1, \tau+2, \cdots, \tau+(\mu_i^\star-1)\}$ (by definition of $\mu_i^\star$), it must predict following the optimal prediction strategy \eqref{eq:optpred} to minimize the expected prediction loss. 
\end{proof}

Lemma \ref{lem:optpolicy-struct} implies that a state-dependent threshold policy $\pi\in\Pi$ (Algorithm \ref{algo:optimal-policy}) that at any time $t$, if the queried state is $s_i\in\cS$, then predicts $k^\star$ \eqref{eq:optpred} at times $t+1,t+2,\cdots,t+\mu_i^\pi-1$, and queries next at time $t+\mu_i^\pi$, is optimal for some threshold $\mu_i^\pi=\mu_i^\star$, $\forall i\in\cS$. 
Therefore, we next derive $\mu_i^\star$'s.

\begin{algorithm} [htb]
	\caption{State-dependent threshold policy $\pi\in\Pi$.}
	\label{algo:optimal-policy}
\begin{algorithmic}[1]
    \STATE compute $\mu^\pi=\{\mu_i^\pi>0: \forall \ i\in\cS\}$;   \ \  // see Lemma \ref{lemma:opt-condtion} 
    \STATE query initial state $X_0$;
    \STATE $\hat{t}=\mu^\pi_{X_0}$; \ \  // compute next query time
    \FOR{time $t=1,2,3,\cdots$}
        \IF{$t\ne\hat{t}$} 
        \STATE predict state $\hat{X}_t=s_{k^\star}$ \eqref{eq:optpred};
        \ELSE
        \STATE query current state $X_t$;
        \STATE $\hat{t}=t+\mu^\pi_{X_t}$; \ \ // update next query time
        \ENDIF
    \ENDFOR
\end{algorithmic}
\end{algorithm}

For policy $\pi\in\Pi$, let $\tau_n^\pi$ denote the $n^{th}$ query time, and $X_{\tau_n}^\pi$ be the corresponding queried state. Note that the time-axis can be partitioned into intervals between successive query times under $\pi$. We refer to each of these intervals as a \emph{stage} for $\pi$, with the $n^{th}$ stage $\chi_n^\pi=\{\tau_n^\pi,\cdots,\tau_{n+1}^\pi-1\}$, where $\tau_{n+1}^\pi=\tau_n^\pi+\mu_{X_{\tau_n}^\pi}^\pi$. 
In each stage, there is exactly one query (at the start of the stage), followed by prediction in the remaining time slots of the stage. Also, due to the Markovian property, under policy $\pi$, the prediction losses in the stage only depends on the state $s_i$ revealed after the query (\emph{queried state}). Therefore, we denote the expected total cost incurred in a stage with queried state $s_i$ as $\bbE[\Lambda(s_i, \mu_i^\pi)]$, where $\mu_i^\pi$ is the threshold for state $s_i$ under state-dependent threshold policy $\pi$. Furthermore, the relative cost in the stage for policy $\pi$ (relative to optimal policy $\pi^\star$) is $\bbE[\overline{\Lambda}^\pi(s_i)]=\bbE[\Lambda(s_i, \mu_i^\pi)]-\Gamma(\pi^\star)\mu_i^\pi$, where $\Gamma(\pi^\star)$ is the sum average cost \eqref{eq:objective} for an optimal policy $\pi^\star$. 

Note that under policy $\pi$, the queried state $X_{\tau_{n}^\pi}$ in $i^{th}$ stage only depends on the queried state $s_i$ in the previous stage. In particular, $X_{\tau_{n}^\pi}=s_j$ with probability $p_{ij}^{(\mu_i^\pi)}$, where the superscript $(\mu_i^\pi)$ denotes $\mu_i^\pi$ step transition probability.
Therefore, the relative cost $h^\pi(s_i)$ incurred under policy $\pi$ starting from the stage with queried state $s_i$ is 

\begin{align}
    \label{eq:relative-cost-function-full}
    h^\pi(s_i)&= 
    \bbE[\Lambda(s_i, \mu_i^\pi)]-\Gamma(\pi^\star)\mu_i^\pi + \sum_{j\in\cS}p_{ij}^{(\mu_i^\pi)} h^\pi(s_j).
\end{align} 

Without loss of generality, let the queried state in first/initial stage be $s_1$. Then under an optimal policy $\pi^\star$, $h^{\pi^\star}(s_1)=0$. Using this, we get the following result.
\begin{lemma} \label{lemma:opt-condtion}
    A policy $\pi\in\Pi$ is an optimal solution of problem \eqref{eq:objective} if and only if $h^\pi(s_1)=0$. 
\end{lemma}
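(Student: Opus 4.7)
The plan is to relate $h^\pi(s_1)$ to the sub-optimality gap $\Gamma(\pi)-\Gamma(\pi^\star)$ via a renewal-reward argument on the Markov chain of queried states. Because the expected cycle length is strictly positive, the iff falls out directly.

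First I would set up the renewal structure. Under $\pi$, let $Y_n$ denote the queried state in the $n$-th stage, so that $(Y_n)_{n\ge 0}$ is a finite-state (assumed irreducible) Markov chain on $\cS$ with one-step transition matrix $[p_{ij}^{(\mu_i^\pi)}]$. Starting from $Y_0=s_1$, the first return time $N_1:=\min\{n\ge 1:Y_n=s_1\}$ is a.s.\ finite with $\bbE[N_1]<\infty$. I define the cycle duration $T_1:=\sum_{n=0}^{N_1-1}\mu_{Y_n}^\pi$ and the cycle cost $C_1:=\sum_{n=0}^{N_1-1}\Lambda(Y_n,\mu_{Y_n}^\pi)$; the renewal-reward theorem then gives $\Gamma(\pi)=\bbE[C_1]/\bbE[T_1]$.

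Next I would unroll \eqref{eq:relative-cost-function-full} stage-by-stage using iterated conditioning and the strong Markov property at the stopping time $N_1$. Adopting the hitting-time interpretation---where $h^\pi(s_i)$ for $i\ne 1$ is the expected relative cost accumulated from $s_i$ until first hitting $s_1$, and $h^\pi(s_1)$ is the cumulative relative cost over one complete cycle---the stage-$0$ term at $s_1$ plus the telescoped tail collapses to the scalar identity
$$h^\pi(s_1)\;=\;\bbE[C_1]\;-\;\Gamma(\pi^\star)\,\bbE[T_1]\;=\;\bigl(\Gamma(\pi)-\Gamma(\pi^\star)\bigr)\,\bbE[T_1],$$
where the second equality uses the renewal-reward formula above. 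Since $\bbE[T_1]>0$, this forces $h^\pi(s_1)=0\iff\Gamma(\pi)=\Gamma(\pi^\star)$, and by definition of $\pi^\star$ as the minimizer of \eqref{eq:objective}, the latter is equivalent to $\pi$ being optimal.

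The main obstacle is that the recursion \eqref{eq:relative-cost-function-full} is a singular linear system: because $[p_{ij}^{(\mu_i^\pi)}]$ is row-stochastic, $h^\pi$ is determined only up to an additive constant, so a convention is required to give $h^\pi(s_1)$ an unambiguous value. The hitting-time interpretation above is one such anchor; an equivalent route is to iterate \eqref{eq:relative-cost-function-full} for $N$ stages from $s_1$, divide by $N$, and observe that boundedness of $h^\pi$ on the finite state space $\cS$ already forces $\Gamma(\pi)=\Gamma(\pi^\star)$. Either route tacitly uses irreducibility of the queried-state chain so that $\bbE[N_1]<\infty$; if $\pi$ makes some states transient or non-communicating, the argument should be restricted to the recurrent class containing $s_1$.
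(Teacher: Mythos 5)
Your proof is correct in substance and is genuinely more explicit than what the paper offers: the paper gives no proof of this lemma at all, merely asserting that ``under an optimal policy $h^{\pi^\star}(s_1)=0$'' and letting the result follow from the standard average-cost/Poisson-equation normalization. Your renewal--reward route---cycles defined by returns of the embedded queried-state chain to $s_1$, the identity $h^\pi(s_1)=\bbE[C_1]-\Gamma(\pi^\star)\bbE[T_1]=(\Gamma(\pi)-\Gamma(\pi^\star))\bbE[T_1]$, and $\bbE[T_1]\ge 1>0$---is a self-contained and elementary way to get the iff, and it correctly surfaces two issues the paper glosses over. First, since \eqref{eq:relative-cost-function-full} uses $\Gamma(\pi^\star)$ rather than $\Gamma(\pi)$, the linear system is actually \emph{inconsistent} (not merely underdetermined) for a non-optimal irreducible $\pi$, so $h^\pi$ must be given an explicit meaning; your hitting-time anchor does exactly that and is the right fix. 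Second, your irreducibility caveat is not cosmetic: the paper's own Theorem~\ref{thm:greedy-bad} example has absorbing states, where $s_1$ is transient, $\bbE[N_1]=\infty$, and the cycle argument breaks down; there one must define $h^\pi$ on transient states via expected relative cost until absorption into a recurrent class, so the lemma as stated really does require a recurrence hypothesis on the embedded chain that neither the paper nor the lemma makes explicit. One small point worth adding: the lemma quantifies over all $\pi\in\Pi$, but $h^\pi$ is only defined for state-dependent threshold policies, so the ``only if'' direction over general online policies implicitly leans on Lemma~\ref{lem:optpolicy-struct}; your argument, like the paper's, establishes the equivalence within the threshold class.
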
 

Note that in $h^\pi(s_1)$ there are $K+1$ unknowns: $\mu_i^\pi$'s (for $i=1,2,\cdots,K$) and $\Gamma(\pi^\star)$ (other terms including $h^\pi(s_i)$'s are functions of these unknowns). Thus, combining the optimality condition $h^\pi(s_1)=0$ along with the $K$ equations in \eqref{eq:relative-cost-function-full} (for $i=1,2,\cdots,K$), and using iterative methods, such as policy iteration \cite{bertsekas2012dynamic}, we get $\mu_i^\star$'s.
\begin{theorem} \label{thm:opt-using-PI}
    On solving 
     $h^\pi(s_1)=0$ along with the $K$ equations in \eqref{eq:relative-cost-function-full} (for $i=1,2,\cdots,K$) using policy iteration \cite{bertsekas2012dynamic}, we get the optimal thresholds
     $\mu_i^\star$'s for the state-dependent threshold policy (Algorithm \ref{algo:optimal-policy}), thus, revealing $\pi^\star$. 
\end{theorem}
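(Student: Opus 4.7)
The plan is to recast \eqref{eq:relative-cost-function-full} together with the normalization $h^\pi(s_1)=0$ as the average-cost Bellman equations of a finite-state semi-Markov decision process (SMDP), and then invoke the standard convergence guarantee for policy iteration on such SMDPs \cite{bertsekas2012dynamic}.

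First I would formalize the SMDP: decision epochs are the query times $\tau_n^\pi$; the SMDP state at epoch $n$ is the queried state $X_{\tau_n^\pi}\in\cS$; the action at state $s_i$ is the threshold $\mu_i\in\bbZ_{>0}$; the sojourn time under this action is $\mu_i$ slots; the expected stage cost is $\bbE[\Lambda(s_i,\mu_i)]=c+\sum_{n=1}^{\mu_i-1}\min_{s_k\in\cS}\sum_{s_j\in\cS}p_{ij}^{(n)}\ell_{jk}$ (using Lemma \ref{lem:optpred} to evaluate the per-slot prediction loss inside the stage); and the stage-to-stage transition probability is $p_{ij}^{(\mu_i)}$. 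Under any state-dependent threshold policy $\pi$, equation \eqref{eq:relative-cost-function-full} is precisely the Poisson (relative-value) equation for this SMDP with gain $\Gamma(\pi^\star)$, and by Lemma \ref{lem:optpolicy-struct} restricting to this class is without loss of optimality. The anchor $h^\pi(s_1)=0$ fixes the additive degree of freedom in $h^\pi$, so that the $K+1$ conditions pin down the $K+1$ unknowns $\{h^\pi(s_i)\}_{i=2}^K$, $\{\mu_i^\pi\}_{i=1}^K$, and $\Gamma(\pi^\star)$.

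Next I would verify the hypotheses needed for policy iteration to converge on this SMDP. The stage cost is uniformly bounded (above by $c+(\mu_i-1)\max_{i,j}\ell_{ij}$, below by $0$); restricting $\cS$ to the recurrent class reachable from $s_1$ renders the SMDP unichain under every stationary policy; and although $\mu_i$ a priori ranges over $\bbZ_{>0}$, it can be truncated to a finite set $\{1,\dots,M\}$ for an $M$ determined by the mixing time of $\cP$ and the slack between $\Gamma(\pi^\star)$ and the stationary expected per-slot loss---beyond the mixing time the marginal per-slot cost stabilizes, so any larger $\mu_i$ is strictly suboptimal. With these conditions met, the classical SMDP result implies that policy iteration, alternating (a) policy evaluation (solving the $K+1$ linear equations \eqref{eq:relative-cost-function-full} with $h^\pi(s_1)=0$ for the current thresholds $\mu^\pi$) and (b) policy improvement (setting $\mu_i^\pi\leftarrow\arg\min_{\mu\in\{1,\dots,M\}}\{\bbE[\Lambda(s_i,\mu)]-\Gamma(\pi^\star)\mu+\sum_j p_{ij}^{(\mu)}h^\pi(s_j)\}$), terminates in finitely many iterations at a policy whose thresholds $\{\mu_i^\star\}$ satisfy the Bellman optimality equation and, by Lemma \ref{lemma:opt-condtion}, coincide with those of $\pi^\star$.

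The hard part is obtaining the action-space bound $M$ in the degenerate settings highlighted by Theorem \ref{thm:greedy-bad}, where $\cP$ has absorbing or near-absorbing structure and $\mu_i^\star$ may be arbitrarily large or even infinite (the ``never query again'' option). I would handle this by preprocessing $\cS$ into its transient and recurrent communicating classes, running the mixing-time argument on each recurrent class to extract a finite $M$, and explicitly admitting $\mu=\infty$ as a limiting action whenever the reachable distribution from $s_i$ asymptotes to a low-loss absorbing distribution (interpreting \eqref{eq:relative-cost-function-full} via $\lim_{\mu\to\infty}p_{ij}^{(\mu)}$). Once this bookkeeping is in place, the remainder is a direct invocation of classical policy-iteration theory for average-cost SMDPs.
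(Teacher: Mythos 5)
Your proposal is correct and is essentially the formalization the paper intends but never writes down: the paper offers no proof of this theorem beyond the preceding paragraph, which simply asserts that policy iteration applied to $h^\pi(s_1)=0$ and the $K$ equations in \eqref{eq:relative-cost-function-full} yields the $\mu_i^\star$'s. Your recasting of the stage decomposition as an average-cost SMDP (epochs at query times, action $=$ threshold, sojourn $=\mu_i$, transition kernel $p_{ij}^{(\mu_i)}$) with \eqref{eq:relative-cost-function-full} as its Poisson equation is the natural way to make that assertion rigorous, and your evaluation/improvement loop is the standard one once the scalar unknown in the evaluation step is read as the current policy's gain rather than $\Gamma(\pi^\star)$. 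What you add beyond the paper is genuinely needed: the convergence guarantee from \cite{bertsekas2012dynamic} requires a finite action set and a unichain (or at least suitably controlled multichain) structure, and neither holds automatically here --- the thresholds range over all of $\bbZ_{>0}$, and the paper's own example in Theorem \ref{thm:greedy-bad} has multiple absorbing states, under which the single-gain relative-value equation and the ``query infinitely often'' stage decomposition both break down. Your truncation-at-$M$ argument and the preprocessing into recurrent classes (with $\mu=\infty$ as a limiting action) are the right repairs, though the mixing-time bound on $M$ would need to be made quantitative to fully close the argument; as written it is a plan rather than a complete proof of that step.
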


\section{Case $2$: Transition Probabilities are Unknown} \label{sec:unknownP}

In previous section, we assumed the state transition probability matrix $\cP$ to be known, and used it to compute the optimal prediction loss in Lemma \ref{lem:optpred}, and formulate the relative cost function \eqref{eq:relative-cost-function-full}. However, in practice, $\cP$ may be unknown. In such cases, one possible option is model-free learning where a policy chooses actions to minimize the sum average cost \eqref{eq:objective} directly by trial-and-error without explicitly learning $\cP$. 
However, this approach suffers a critical challenge because the loss is not revealed in case of a prediction.  
Moreover, the cost function (prediction loss \eqref{eq:optpred}) at time $t$ depends on $t-\tau(t)$ (the time since last query). Therefore, under the model-free learning approach, a policy needs to learn to optimize the cost function \eqref{eq:optpred}
for each time slot $\tau+1, \tau+2, \cdots$, separately. 

Alternatively, a policy could learn the probability matrix $\cP$ (estimate $\hat{\cP}$) and use it to compute the cost function \eqref{eq:optpred} (using $(\hat{\cP})^n$) for all time slots $\tau+n$ ($n\ge 1$) as in the previous section. However, to compute $\hat{\cP}$, the classical stopping-time technique \cite{hao2018learning} where a policy queries in first $\alpha$ time slots (for some $\alpha>0$) and computes probability $\hat{p}_{ij}$ by computing the ratio of number of transitions from state $s_i$ to $s_j$ and the number of visits to state $s_i$ is inefficient. This is because the error in $\hat{\cP}$ amplifies while computing $\hat{\cP}^2, \hat{\cP}^3, \cdots$ (at time $\tau+2, \tau+3, \cdots$ respectively), thus requiring $\alpha$ to be very large.
Therefore, in this section, we propose an alternate approach for updating $\hat{\cP}$ using \emph{projected stochastic gradient descent (PSGD)} \cite{lacoste2012simpler}. 

Broadly, the PSGD-based approach (Algorithm \ref{algo:PSGD}) is as follows: Whenever there is a query at time $\tau+n$ ($\forall \ n\ge 1$), we compute a loss as a function of the state predicted using current estimate $(\hat{\cP})^n$ and the queried state $X_{\tau+n}$. Subsequently, we compute the gradient of the loss with respect to (w.r.t.) $\hat{\cP}$, and update the current estimate of $\hat{\cP}$ using the gradient. Since the row-sum of matrix $\hat{\cP}$ must be $1$, we finally project $\hat{\cP}$  (reset its negative elements to zero, and then normalize each element by the row-sum) to ensure that the row-sum constraint is satisfied. 

\begin{algorithm} [htb]
	\caption{PSGD to update $\cP$ estimate after each new query.}
	\label{algo:PSGD}
\begin{algorithmic}[1]
    \STATE latest query time $\tau$ and queried state $X_\tau=s_i$; 
    \FOR{time $t=\tau+1,\tau+2,\tau+3,\cdots$}
        \IF{query at time $t$ with state $X_t=s_j$} 
        \STATE define $y_k=1$ if $k=j$ and $y_k=0$ otherwise;
        \STATE compute $\hat{p}_{ik}^{(n)}$, $\forall k$, using current estimate $\hat{\cP}$;
        \STATE compute loss $\cF_{i,n}=\sum_{k=1}^K\left(y_k-\hat{p}_{ik}^{(n)}\right)^2$;
        \STATE compute gradient $\nabla \cF_{i,n}$ w.r.t. $\hat{\cP}$;
        \STATE update $\hat{\cP}=\hat{\cP}-\eta_t\nabla \cF_{i,n}$, where $\eta_t$ is the learning rate;
        \STATE project  
        $\hat{\cP}$ so that the row-sum of $\hat{\cP}$ is $1$;
        \STATE update $\tau=t$ and $X_\tau=s_j$; 
        \ELSE
        \STATE continue;
        \ENDIF
    \ENDFOR
\end{algorithmic}
\end{algorithm}

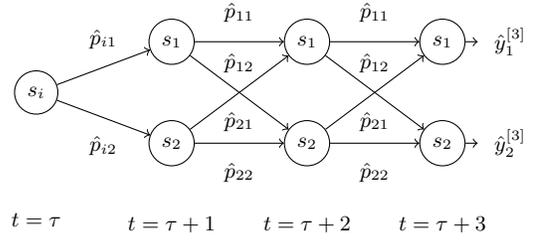
\begin{figure}
\centering
    \begin{tikzpicture}[scale=0.9, transform shape,
        every node/.style={circle, draw, minimum size=5mm},
        every path/.style={draw, ->},
        font=\small
    ]
    \tikzset{My Style/.style={draw=none, fill=none}} 
    
    \def\vs{1.5}

    \foreach \layer in {1,2,3} {
        \foreach \i in {1,2} {
            \node (L\layer\i) at (\layer*2, -\i*\vs) {$s_{\i}$};
        }
    }

    \node (I1) at (0, -1.5*\vs) [circle, draw, minimum size=5mm] {$s_{i}$}; 
    \draw (I1) -- node[My Style, midway, above] {$\hat{p}_{i1}$} (L11); 
    \draw (I1) -- node[My Style, midway, below] {$\hat{p}_{i2}$} (L12);
    \draw (I1) node[My Style, below=1.35cm]{$t=\tau$};

    \foreach \i in {1,2} {
        \node (I\i) at (7, -\i*\vs) [My Style] {$\hat{y}_{\i}^{[3]}$}; 
        \draw (L3\i) -- (I\i); 
    }

    \foreach \i in {1} {
        \foreach \j in {1,2} {
            \draw (L1\i) -- node[My Style, midway, above] {$\hat{p}_{\i\j}$} (L2\j);
        }
    }

     \foreach \i in {1} {
        \foreach \j in {1,2} {
            \draw (L2\i) --  node[My Style, midway, above] {$\hat{p}_{\i\j}$} (L3\j);
        }
    }
    \foreach \i in {2} {
        \foreach \j in {1,2} {
            \draw (L1\i) -- node[My Style, midway, below] {$\hat{p}_{\i\j}$} (L2\j);
        }
    }
    \foreach \i in {2} {
        \foreach \j in {1,2} {
            \draw (L2\i) -- node[My Style, midway, below] {$\hat{p}_{\i\j}$} (L3\j);
        }
    }
    \draw (L12) node[My Style, below=0.4cm]{$t=\tau+1$};
    \draw (L22) node[My Style, below=0.4cm]{$t=\tau+2$};
    \draw (L32) node[My Style, below=0.4cm]{$t=\tau+3$};
    \end{tikzpicture}
    \vspace{-1.7em}
    \caption{A neural network representation of the state transition process. This graph can be used to compute the error in state predictions, as well as the gradient of the loss w.r.t. probability estimates $\hat{p}_{ij}$. \vspace{-3ex}} 
    \label{fig:neural-network}
\end{figure}

\begin{remark}
    The gradients of the loss $\cF_{i,n}$ w.r.t. $\hat{p}_{ij}$ can be computed efficiently using the chain rule. In interpreting the state transitions over time as a feedforward neural network (Figure \ref{fig:neural-network}), where each node in the $m^{th}$ layer ($m=0,1,\cdots,n$) represents the state of the Markov chain at time $t=\tau+m$, and the edges represent the transition probabilities, we can compute the loss $\cF_{i,n}$ as a function of the estimated transition probabilities $\hat{p}_{ij}$. The gradient of the loss w.r.t. $\hat{p}_{ij}$ ($\forall \ m=1,2,\cdots,n$) can be computed using backpropagation. Note that the gradient w.r.t. $\hat{p}_{ij}$ in each layer can be different. The actual gradient of the loss $\cF_{i,n}$ w.r.t. $\hat{p}_{ij}$ is equal to the sum of the gradients of the loss w.r.t. $\hat{p}_{ij}$ in each layer.
\end{remark}

A major advantage of the PSGD approach is that it allows querying in non-consecutive time slots. Note that if the state at time $\tau$ is $s_i$ and we query at time $\tau+1$, the new state only reveals information about the transition probabilities $p_{ij}$ from state $i$. Therefore, despite being extremely useful in computing initial estimates of $\cP$, they turn out expensive (require too many queries) while refining  transition probability estimates, especially from states that are visited less frequently.
In contrast, note that $p_{1j}^{(n)}$ for $n\ge 3$ depends on all elements of $\cP$. Hence, using PSGD we can query at any time $\tau+n$ for $n\ge 3$, and refine $p_{ij}$ estimates $\forall \ i,j=\{1,2,\cdots,K\}$, easily.

\begin{remark}
    We use PSGD to minimize loss in transition probability estimates $\hat{\cP}$ instead of prediction loss \eqref{eq:optpred}  
    because the prediction loss is a function of $t-\tau$ (time since last query), and the minimizer of prediction loss can vary with time. In contrast, the minimizer of loss $\cF_{i,n}$ is $\cP$ for all $n\ge 1$.
\end{remark}

Though PSGD approach is useful in estimating the transition probability matrix $\cP$, the overall performance in minimizing the sum average cost \eqref{eq:objective} still depends on when the queries are made. Therefore, we must combine PSGD with some policy that at each time, uses the $\hat{\cP}$ estimates computed using PSGD and make query/predict decision optimizing \eqref{eq:objective}.  
However, this could be a risky approach because if a crude probability estimate $\hat{\cP}$ suggests to never query, then $\hat{\cP}$ will never be updated, and the policy will be stuck with a suboptimal decision. Therefore, we need to ensure that the policy queries at least once in every $N$ time slots (for some finite $N\ge 1$), and then use the PSGD approach to update $\hat{\cP}$. Finding such an optimal online policy tailored with PSGD updates is a challenging problem, and is a part of our ongoing work. Meanwhile, we may use PSGD with the greedy policy $\pi^g$ (Algorithm \ref{algo:greedy-policy}), modified to query whenever the last query is $N$ time slots old. We analyze the resulting PSGD+Greedy policy $\pi^{pg}$ (Algorithm \ref{algo:PSGD-greedy}) in Section \ref{sec:numerical-results} using numerical simulations. 

\begin{algorithm} [htb]
	\caption{PSGD+Greedy policy $ \pi^{pg}$.}
	\label{algo:PSGD-greedy}
\begin{algorithmic}[1]
    \STATE initialize maximum inter-query time $N\ge 1$ (finite); 
    \FOR{time $t=1,2,3,\cdots$}
    \STATE use $\pi^g$ (Algorithm \ref{algo:greedy-policy}) with current estimate $\hat{\cP}$ to decide whether to predict or query; 
    \STATE if query, update $\hat{\cP}$ using PSGD (Algorithm \ref{algo:PSGD});
    \ENDFOR
\end{algorithmic}
\end{algorithm}
\begin{remark}
    We use PSGD with \( \pi^g \) instead of \( \pi^\star \) (Algorithm~\ref{algo:optimal-policy}) since the predict/query decisions in \( \pi^\star \) depend intricately on all entries of \( \cP \), and even small estimation errors can degrade its performance. Moreover, \( \pi^\star \) requires running policy iteration (Theorem~\ref{thm:opt-using-PI}) after each query (i.e., after every \( \hat{\cP} \) update), making it computationally expensive. In contrast, the PSGD+Greedy policy \( \pi^{pg} \) (Algorithm~\ref{algo:PSGD-greedy}) is significantly simpler to implement.
\end{remark}

The following result shows that as time $t\to\infty$,  under $\pi^{pg}$, $\hat{\cP}$ converges to $\cP$ ($\pi^{pg}$ converges to the greedy policy $\pi^g$). 
\begin{theorem} \label{thm:PSGD-convergence}
    \label{prop:PSGD-convergence}
    For PSGD, let the learning rate for $m^{th}$ update step be $\eta_m=1/(8NK+m)$, where $N$ denotes the maximum inter-query time under policy $\pi^{pg}$ (Algorithm \ref{algo:PSGD-greedy}) and $K$ is the number of states in $\cS$. Under $\pi^{pg}$, as $t\to\infty$, the estimate $\hat{\cP}$ converges to $\cP$, and $\pi^{pg}$ converges to the greedy policy $\pi^g$ (Algorithm \ref{algo:greedy-policy}) with probability $1$. 
\end{theorem}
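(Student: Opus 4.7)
The plan is to view Algorithm~\ref{algo:PSGD} as a projected stochastic approximation on the compact convex set $\Delta$ of row-stochastic $K\times K$ matrices, apply a Robbins--Monro style almost-sure convergence theorem to the iterates $\hat{\cP}_m$, and then use continuity of the greedy decision rule in $\hat{\cP}$ to conclude $\pi^{pg}\to\pi^g$.

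First, since $\pi^{pg}$ forces a query at least once every $N$ slots, the number of PSGD updates is a.s.\ infinite and has positive density in $t$. Indexing the updates by $m=1,2,\ldots$, the recursion reads $\hat{\cP}_{m+1}=\mathrm{proj}_\Delta(\hat{\cP}_m-\eta_m G_m)$ with $G_m=\nabla_{\hat{\cP}}\cF_{i_m,n_m}(\hat{\cP}_m)$. Each $\hat p_{ik}^{(n)}$ is a multilinear polynomial of degree $n\le N$ in entries of $[0,1]$, so a routine combinatorial bound yields $\|G_m\|_2\le 8NK$; this is the source of the $8NK$ constant in the stepsize, and $\eta_m=1/(8NK+m)$ then satisfies the Robbins--Monro conditions $\sum_m\eta_m=\infty$, $\sum_m\eta_m^2<\infty$, while controlling the one-step perturbation.

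Next I would verify the mean-field behavior. Because the queried states are drawn from the true Markov chain, the update is conditionally unbiased: $\bbE[G_m\mid\hat{\cP}_m=\hat{\cP}]=\nabla F(\hat{\cP})$ for $F(\hat{\cP})=\bbE[\cF_{i_m,n_m}(\hat{\cP})]$, where the outer expectation is over the sampling distribution of $(i_m,n_m)$ induced by $\pi^{pg}$ (itself a function of $\hat{\cP}$). Each component $\bbE[(y_k-\hat p_{ik}^{(n)})^2]$ is minimized by the true transition probabilities, so $\cP$ is a global zero of $\nabla F$ on $\Delta$. Uniqueness of this attractor follows from the $n=1$ components, which are strictly convex in the corresponding row of $\hat{\cP}$ and which receive positive weight for every row that is visited infinitely often---ensured by irreducibility of $\cM$ together with the forced-query cap $N$. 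With a bounded stochastic gradient, diminishing square-summable stepsize, compact feasible set, and a unique global attractor, a standard projected stochastic approximation result (Kushner--Yin / Robbins--Siegmund) then yields $\hat{\cP}_m\to\cP$ almost surely.

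For the policy claim, the greedy predict/query decision after $n$ slots from queried state $s_i$ is governed by the sign of $\sum_j \hat p_{ij}^{(n)}\ell_{jk^\star}-c$, a continuous function of $\hat{\cP}$. Except on the measure-zero set of $\cP$'s where this quantity equals zero exactly for some $(i,n)$ with $n\le N$, the decisions of $\pi^{pg}$ coincide with those of $\pi^g$ for all $t$ sufficiently large, giving the second claim. The hard part is the identifiability step: for $n>1$ the loss $\cF_{i,n}$ is a non-convex polynomial in $\hat{\cP}$, and $\nabla F$ could in principle vanish at a spurious $\hat{\cP}\ne\cP$ with $\hat{\cP}^n=\cP^n$ for some $n$. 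Circumventing this requires leveraging the $n=1$ samples rigorously and handling the regime where the greedy rule seldom triggers small-$n$ queries at a particular state, so that one must rely on the forced cap $N$ together with irreducibility of $\cM$ to guarantee that every row of $\cP$ is separately identified in the limit.
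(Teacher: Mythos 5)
Your route is genuinely different from the paper's. The paper's (sketch-level) argument rests on two claims: (i) $\cF_{i,n}$ is convex in the entries of $\hat{\cP}$ for every $n\ge 1$, and (ii) with the stepsize $\eta_m=1/(8NK+m)$ each projected update contracts the expected distance $\|\hat{\cP}-\cP\|$, in the style of the cited strongly-convex SGD analysis. You instead set the problem up as projected stochastic approximation on the compact set of row-stochastic matrices, verify Robbins--Monro stepsize conditions and conditional unbiasedness of the gradient, and argue convergence to a unique attractor, finishing with a continuity argument for the predict/query decisions (correctly excising the measure-zero tie set where $\sum_j p_{ij}^{(n)}\ell_{jk^\star}=c$). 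Your framing has the advantage of making explicit two things the paper's sketch suppresses: the sampling distribution of $(i_m,n_m)$ is controlled by $\hat{\cP}_m$ itself, and rows of $\cP$ for states not visited infinitely often cannot be learned, so some recurrence/irreducibility hypothesis is implicitly needed.

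That said, your proposal does not close the argument, and you say so yourself: the identifiability step for $n>1$ is left open. This is a genuine gap, but it is worth noting that it is a gap you share with (and in fact expose in) the paper's own proof. For $n\ge 2$, $\hat p_{ik}^{(n)}$ is a degree-$n$ polynomial in the entries of $\hat{\cP}$, and $\bigl(y_k-\hat p_{ik}^{(n)}\bigr)^2$ is not convex in $\hat{\cP}$ (already $\hat p_{11}^{(2)}$ contains $\hat p_{11}^2$, and $(a-x^2)^2$ has negative second derivative near $x=0$ for $a>0$), so the paper's claim (i) cannot hold as stated for all $n\ge 1$, and a per-step contraction argument based on convexity alone does not go through. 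Your proposed repair---anchor identifiability on the $n=1$ (strictly convex, linear-in-$\hat{\cP}$) samples---is the right instinct, but as you note, the greedy rule may never issue $n=1$ queries from some states, and the forced cap only guarantees queries at lag $N$, which identifies rows of $\cP^N$ rather than rows of $\cP$; matrices with $\hat{\cP}\ne\cP$ but matching $n$-step marginals along the sampled lags are not excluded. Closing this requires either showing that the sampled lags collectively determine $\cP$ (e.g., via queries at two coprime lags, or a localization argument near $\cP$ where the Jacobian of $\cP\mapsto\cP^n$ is invertible), or modifying the algorithm to inject occasional $n=1$ queries. As it stands, both your argument and the paper's establish at most convergence of the relevant $n$-step marginals, not of $\hat{\cP}$ itself.
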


\begin{proof}[Proof Sketch]
    Since the maximum inter-query time $N$ is finite, as $t\to\infty$, the number of queries (and $\hat{\cP}$ updates) made by policy $\pi^{pg}$ also approaches infinity.
    The key idea in the proof is to show that as the number of iteration $m\to\infty$, the distance between $\hat{\cP}$ and $\cP$ converges to zero. The proof involves showing: i) the loss function $\cF_{i,n}$ is convex w.r.t. $p_{ij}$, $\forall \ s_i,s_j\in\cS$, and $n\ge 1$, and ii) with convex  $\cF_{i,n}$ and $\eta_m=1/(8NK+m)$, each PSGD step ($\hat{\cP}$ update) decreases the expected (norm) distance between $\hat{\cP}$ and $\cP$. 
\end{proof}

\section{Numerical Results} \label{sec:numerical-results}
Next, we analyze the algorithms and results discussed so far, along with relevant heuristics using numerical simulations.

\begin{remark} \label{rem:max-inter-query-time}
	From Theorem \ref{thm:greedy-bad}, we know that for certain transition probabilities $\cP$, the performance of $\pi^g$ can be arbritrarily bad, essentially due to aggressive prediction. Therefore, in this section, we limit the maximum inter-query time for $\pi^g$ (and other heuristic policies) to $N=10$ time slots. 
\end{remark}

Consider a Markov chain $\cM$ with $K=5$ states, $\cP=\begin{bmatrix} 0.5 & 0.5 & 0 & 0 & 0 \\ 0.1 & 0.1 & 0.6 & 0.2 & 0 \\ 0.2 & 0.1 & 0 & 0.5 & 0.2 \\ 0.1 & 0.2 & 0.2 & 0 & 0.5 \\ 0.1 & 0.1 & 0.2 & 0.3 & 0.3 \end{bmatrix}$, and $\cL=\begin{bmatrix} 0 & 1 & 2 & 3 & 4 \\ 1 & 0 & 1 & 2 & 3 \\ 2 & 1 & 0 & 1 & 2 \\ 3 & 2 & 1 & 0 & 1 \\ 4 & 3 & 2 & 1 & 0 \end{bmatrix}$. For $\cM$, we simulate policies $\pi^\star$, $\pi^g$ and $\pi^{pg}$ (Algorithm \ref{algo:PSGD-greedy}), along with two heuristic policies: i) uniform sampling policy $\pi^{us}$, which queries at a fixed time interval $\Delta^{us}=2$ (predicts $k^\star$ \eqref{eq:optpred} otherwise), and ii) a policy $\pi^{sd}$, which is similar to $\pi^g$, except that for prediction it uses stationary distribution of the Markov chain $\cM$ instead of the transition probabilities from the last queried state. Note that for considered $\cP$, the stationary distribution $p_{ij}^{(\infty)}=0.2$, $\forall $$s_i,s_j\in\cS$. 

We simulate the policies for different values of query cost $c$, and compute their sum average cost $\Gamma(\pi)$ by averaging over $100000$ time slots. The results are shown in Figure \ref{fig:recurrent_vsQcost}. As expected, when the query cost is small, except for $\pi^{us}$ that queries at fixed interval $\Delta^{us}$, all other policies query more aggressively, and perform identically. Likewise, when the query cost is too large, policies rely mostly on predictions, again leading to similar performance. However, with moderate query cost (i.e., $c\in[1.25, 1.5]$), $\pi^\star$ outperforms all others.

Recall that $\pi^{pg}$ is expected to converge to $\pi^g$. However, in Figure \ref{fig:recurrent_vsQcost}, for large $c$, there is a slight gap in their performance. This is because when $c$ is large, there are fewer queries, and the estimate $\hat{\cP}$ is updated less frequently. Therefore, when $c$ is large, convergence requires longer simulations in time.

\begin{figure} 
	\centerline{\includegraphics[width=0.8\linewidth]{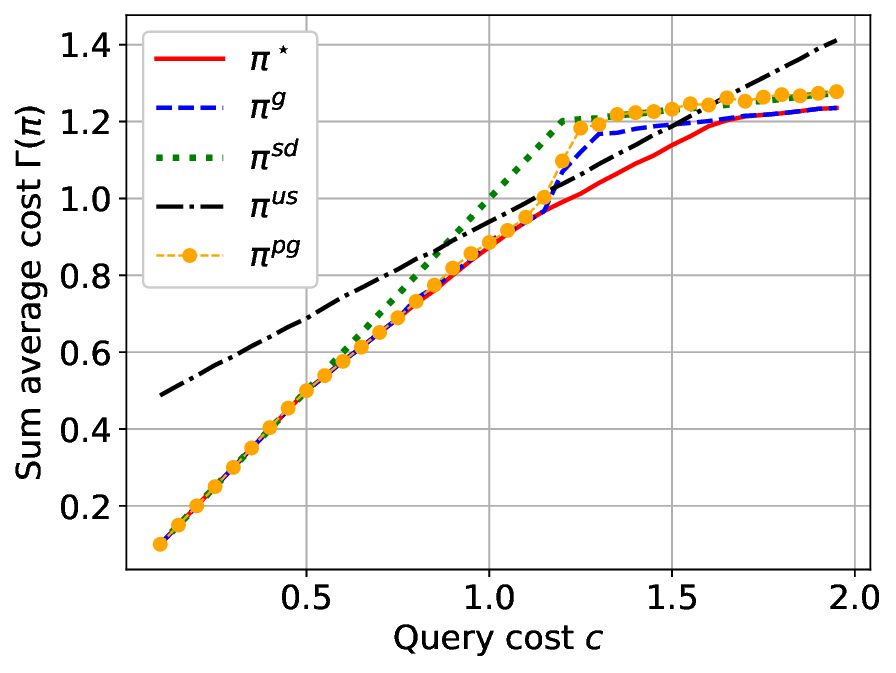}}
    \vspace{-1em}
	\caption{Performance of policies w.r.t. query cost $c$ in a recurrent Markov chain.\vspace{-2ex}}
	\label{fig:recurrent_vsQcost}
\end{figure}

From Figure \ref{fig:recurrent_vsQcost}, we observe that the performance of the uniform sampling policy $\pi^{us}$ strictly depends on the query cost $c$, and its performance can be poor if the sampling interval $\Delta^{us}$ is not optimized for $c$. For the above simulation, we optimized $\Delta^{us}$ for $c=1.4\in[1.25,1.5]$ where the performance of the policies differ the most. Figure \ref{fig:recurrent_vsUniformThresh} shows the performance of $\pi^{us}$ under the same setting (as previous simulation) with $c=1.4$ for different values of $\Delta^{us}$. We find that for certain values of $\Delta^{us}$, $\pi^{us}$ outperforms the greedy policy $\pi^g$, and $\pi^{sd}$. 
However, $\pi^{us}$ is never able to match the performance of $\pi^\star$ (Algorithm \ref{algo:optimal-policy}) which has state-dependent sampling intervals. 

Note that Figure \ref{fig:recurrent_vsUniformThresh} additionally  shows the sum average cost $\Gamma(\pi^{us}_{np})$ for a uniform sampling policy $\pi^{us}_{np}$, which is identical to $\pi^{us}$ except that instead of using the optimal prediction strategy \eqref{eq:optpred}, it assumes the next state is identical to the last queried state. Clearly, $\pi^{us}_{np}$ performs worse among all policies, illustrating the importance of prediction strategy \eqref{eq:optpred}.

\begin{figure} 
	\centerline{\includegraphics[width=0.8\linewidth]{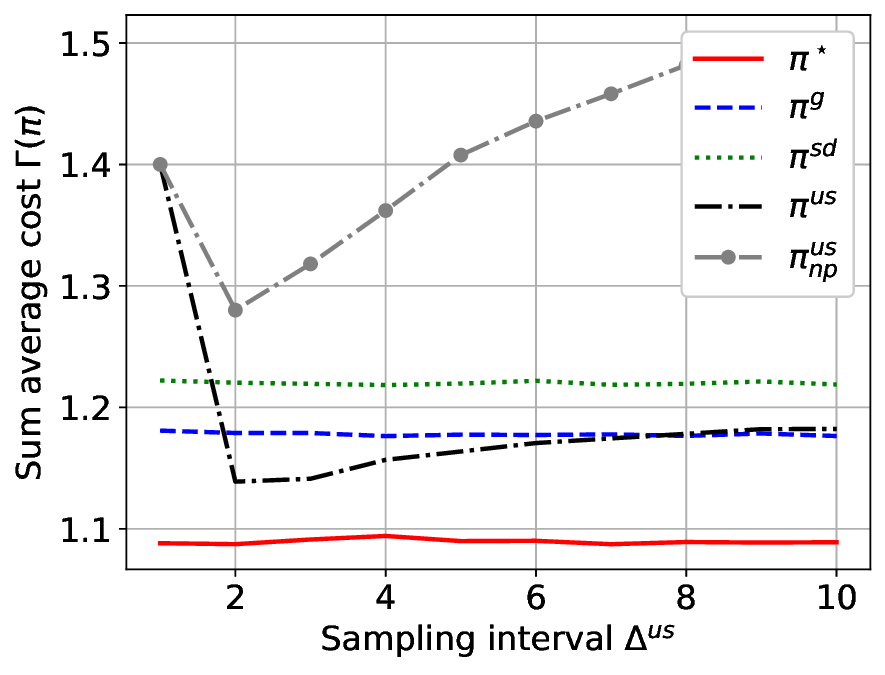}}
    \vspace{-1em}
	\caption{Performance of uniform sampling policy w.r.t. sampling interval, and prediction strategy.\vspace{-2ex}}
	\label{fig:recurrent_vsUniformThresh}
\end{figure}

Finally, to ensure that our observations are not tailored to few specific Markov chains, we simulate the policies for multiple randomly generated transition probability matrices $\cP$ (other parameters identical to the previous simulation). 
Interestingly, we find that in this case, as shown in Figure \ref{fig:vsP}, the performance  of the greedy policy $\pi^g$ is close to $\pi^\star$. 
This could be attributed to the fact that in each iteration, the elements of matrix $\cP$ are identically distributed.  
Therefore, the uncertainty in the state trajectory from each state is similar.  

Thus, comparing the result in Figure \ref{fig:vsP} with those of previous simulations, we find that $\pi^\star$ is most useful when $\cP$ is such that from any state the node is most likely to transition to a small subset of states than the others, i.e., the uncertainty regarding future state is low. Otherwise, $\pi^g$ is a better choice due to its simplicity and ease of implementation. 

\begin{figure} 
	\centerline{\includegraphics[width=0.8\linewidth]{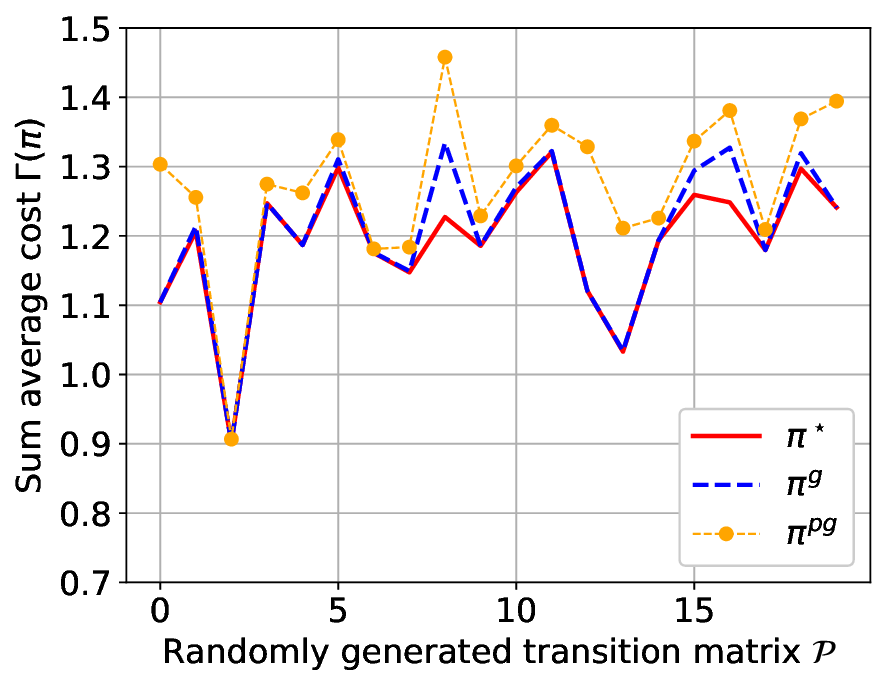}}
    \vspace{-1em}
	\caption{For randomly generated matrices $\cP$, the performance of greedy policy $\pi^g$ is close to the optimal policy $\pi^\star$.\vspace{-2ex}}
	\label{fig:vsP}
\end{figure}

\section{Conclusion} \label{sec:conclusion}
For the considered problem pertaining to the query-predict tradeoff, we found that in the Markovian setting, a state-based threshold policy $\pi^\star$ is optimal. However, determining the optimal thresholds is computationally expensive, especially, in dynamic settings where the thresholds need to be updated frequently (e.g. when certain parameters are unknown, and need to be learnt from observations). Therefore, we considered a greedy policy $\pi^g$, which is easy to implement, even in non-Markovian/dynamic settings. We saw that despite its simplicity, the greedy policy is near-optimal when the transition probabilities across states are identically distributed, e.g. in settings with limited prior knowledge of the environment or the state transitions. This is particularly relevant in modern systems where state-transitions is often non-Markovian and highly complex to be modeled accurately. However, in setting with absorbing states (or groups of states) $\pi^g$ must be implemented carefully, with periodic/random queries to improve prediction quality when the node is in one such state.

\bibliographystyle{IEEEtran} 
\bibliography{refs.bib}

\end{document}